\documentclass[10pt,twocolumn,letterpaper]{article}

\usepackage{cvpr}
\usepackage{times}
\usepackage{epsfig}
\usepackage{graphicx}
\usepackage{subcaption}
\usepackage{amsmath}
\usepackage{amssymb}

% Include other packages here, before hyperref.
\usepackage{amsthm}
\usepackage[ruled,vlined]{algorithm2e}
\usepackage{makecell}

\DeclareMathOperator*{\argmin}{argmin}

\usepackage{xcolor,colortbl}
\definecolor{Gray}{gray}{0.9}

\usepackage[pagebackref=true,breaklinks=true,letterpaper=true,colorlinks,bookmarks=false]{hyperref}

\cvprfinalcopy

\ifcvprfinal\pagestyle{empty}\fi
\begin{document}

\renewcommand{\thesubfigure}{\Alph{subfigure}}

\newcommand{\norm}[1]{\left\lVert#1\right\rVert_2}

\newtheorem{proposition}{Proposition}
\newtheorem{corollary}{Corollary}
\newtheorem{theorem}{Theorem}

\makeatletter
\renewenvironment{proof}[1][\proofname]{\par
  \vspace{-\topsep}% remove the space after the theorem
  \pushQED{\qed}%
  \normalfont
  \topsep0pt \partopsep0pt % no space before
  \trivlist
  \item[\hskip\labelsep
        \itshape
    #1\@addpunct{.}]\ignorespaces
}{%
  \popQED\endtrivlist\@endpefalse
  \addvspace{6pt plus 6pt} 
}
\makeatother

\title{
%Adversarial Pattern Matching Towards Accurate Structured Prediction
%Adversarial Output Pattern Matching for Structured Prediction
%Supplementary Material: \\
Adversarial Structure Matching for Structured Prediction Tasks 
}

\author{
\begin{tabular}{c@{\hskip 20pt}c@{\hskip 20pt}c@{\hskip 20pt}c}
    Jyh-Jing Hwang$^{1,2,*}$ &
    Tsung-Wei Ke$^{1,*}$ &
    Jianbo Shi$^2$ & 
    Stella X. Yu$^1$
\end{tabular} \\
\begin{tabular}{c@{\hskip 20pt}c}
    $^1$UC Berkeley / ICSI & $^2$University of Pennsylvania \\
    Berkeley, CA, USA & Philadelphia, PA, USA \\
    {\tt\small \{jyh,twke,stellayu\}@berkeley.edu} & {\tt\small \{jyh,jshi\}@seas.upenn.edu}
\end{tabular}
}

\maketitle
\thispagestyle{empty}

{\let\thefootnote\relax\footnote{* Equal contributions.}}

\begin{abstract}
Pixel-wise losses, e.g., cross-entropy or L2, have been widely used in structured prediction tasks as a spatial extension of generic image classification or regression. However, its i.i.d. assumption neglects the structural regularity present in natural images.
Various attempts have been made to incorporate structural reasoning mostly through structure priors in a cooperative way where co-occurring patterns are encouraged. 

We, on the other hand, approach this problem from an opposing angle and propose a new framework, Adversarial Structure Matching (ASM), for training such structured prediction networks via an adversarial process, in which we train a structure analyzer that provides the supervisory signals, the ASM loss.
The structure analyzer is trained to maximize the ASM loss, or to emphasize recurring multi-scale hard negative structural mistakes among co-occurring patterns.
On the contrary, the structured prediction network is trained to reduce those mistakes and is thus enabled to distinguish fine-grained structures.
As a result, training structured prediction networks using ASM reduces contextual confusion among objects and improves boundary localization.
We demonstrate that our ASM outperforms pixel-wise IID loss or structural prior GAN loss on three different structured prediction tasks: semantic segmentation, monocular depth estimation, and surface normal prediction.

\end{abstract}

\section{Introduction}
\label{sec:intro}

Pixel-wise losses, e.g. cross-entropy or L2, are widely used in structured prediction tasks such as semantic segmentation, monocular depth estimation, and surface normal prediction \cite{hwang2015pixel,long2015fully,eigen2014depth,laina2016deeper}, as a spatial extension of generic image recognition~\cite{krizhevsky2012imagenet,he2016deep}. However, the disadvantage of such pixel-wise losses is also obvious due to its additive nature and i.i.d. assumption of predictions: IID losses would yield the same overall error for different spatial distributions of prediction mistakes. Ideally, some mistakes such as implausible and incomplete round wheels should incur more penalty than slightly thinner wheels.  Structural reasoning is thus highly desirable for structured prediction tasks.

\begin{figure}[t]
    \centering
    \includegraphics[width=1.0\linewidth]{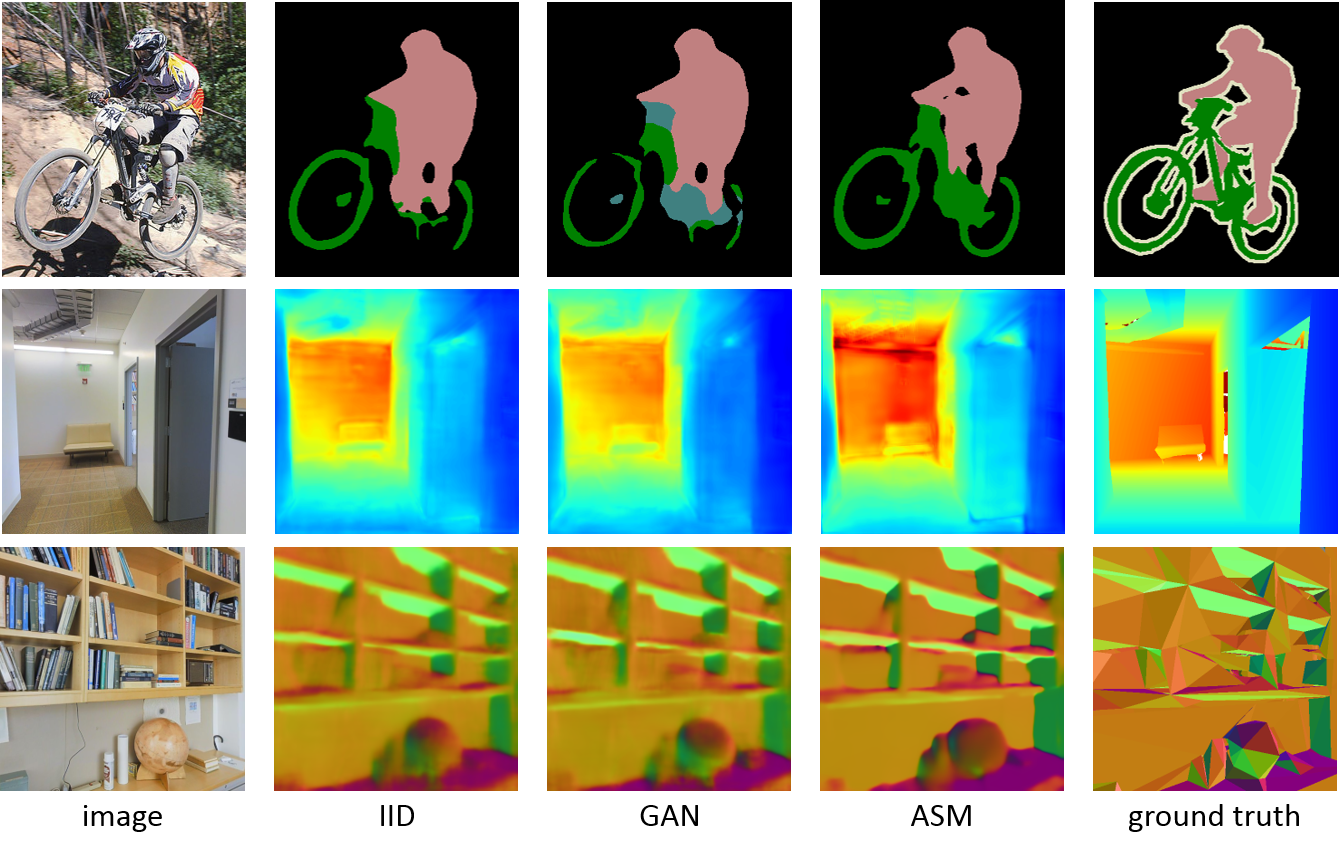}
    \caption{Experimental results on semantic segmentation on PASCAL VOC 2012~\cite{everingham2010pascal}, monocular depth estimation, and surface normal prediction on Stanford 2D-3D-S \cite{2D-3D-S}. Networks trained using IID loss and shape prior based (GAN) loss mostly fail at confusing contexts (top row) and ambiguous boundaries (bottom 2 rows) whereas our ASM approach improves upon these aspects.}
    \label{fig:teaser}
\end{figure}

Various attempts have been made to incorporate structural reasoning into structured prediction in a \textit{cooperative} way, including two mainstreams, 
bottom-up Conditional Random Fields (CRFs)~\cite{krahenbuhl2011efficient,zheng2015conditional} and top-down shape priors~\cite{xie2016top,gygli2017value,ke2018adaptive} or Generative Adversarial Networks (GANs)~\cite{goodfellow2014generative,luc2016semantic,pix2pix2017}:
{\bf (1)} CRF enforces label consistency between pixels and is commonly employed as a post-processing step~\cite{krahenbuhl2011efficient,chen2016deeplab}, or as a plug-in module inside deep neural networks~\cite{zheng2015conditional,liu2015semantic} that coordinate bottom-up information.
Effective as it is, CRF is usually sensitive to input appearance changes and needs expensive iterative inference.
{\bf (2)} As an example of learning top-down shape priors, GAN emerges as an alternative to enforce structural regularity in the structured prediction space. Specifically, the discriminator network is trained to distinguish the predicted mask from the ground truth mask. Promising as it is, GAN suffers from inaccurate boundary localization as a consequence of generic shape modeling.
More recently, Ke~{\it et~al.}~\cite{ke2018adaptive} propose adaptive affinity fields that capture structural information with adaptive receptive fields. However, it is designed specifically for classification and cannot be extended straightforward to regression.

As a result, top-down \textit{cooperative} approaches prefer an additional loss (together with IID) that penalizes more on the anomaly structures that are deemed undesirable. Such trained networks are thus aware of intra-category shape invariance and inter-category object co-occurrences. However, we notice that in real examples as in Fig.~\ref{fig:teaser}, complex and deformable shapes and confusing co-occurrences are the most common mistakes in structured prediction especially when the visual cues are ambiguous. As a result, training with shape priors sometimes deteriorates the prediction as shown in the bicycle example. We are thus inspired to tackle this problem from an \textit{opposing} angle: top-down approaches should adapt the focus to confusing co-occurring context or ambiguous boundaries so as to make the structured prediction network learn harder.

We propose a new framework called Adversarial Structure Matching (ASM), which replaces IID losses, for training structured prediction networks via an adversarial process, in which we train a structure analyzer to provide supervisory signals, the adversarial structure matching (ASM) loss. By maximizing ASM loss, or learning to exaggerate structural mistakes from the structured prediction networks, the structure analyzer not only becomes aware of complex shapes of objects but adaptively emphasize those {\it multi-scale hard negative structural mistakes}. As a result, training structured prediction networks by minimizing ASM loss reduces contextual confusion among co-occurring objects and improves boundary localization. To improve the stability of training, we append a structure regularizer on the structure analyzer to compose a structure autoencoder. By training the autoencoder to reconstruct ground truth, which contains complete structures, we ensure the filters in the structure analyzer form a good structure basis. We demonstrate that structured prediction networks trained using ASM outperforms its pixel-wise counterpart and GAN on the figure-ground segmentation task on Weizmann horse dataset~\cite{borenstein2002horse} and semantic segmentation task on PASCAL VOC 2012 dataset~\cite{everingham2010pascal} with various base architectures, such as FCN~\cite{long2015fully}, U-Net~\cite{ronneberger2015u}, DeepLab~\cite{chen2016deeplab}, and PSPNet~\cite{zhao2016pyramid}. Besides the structured classification tasks, we also verify the efficacy of ASM on structured regression tasks, such as monocular depth estimation and surface normal prediction, with U-Net \cite{ronneberger2015u} on Stanford 2D-3D-S dataset \cite{2D-3D-S}.

\section{Related Works}
\label{sec:work}

\begin{figure*}[t]
    \centering
    \includegraphics[width=0.95\linewidth]{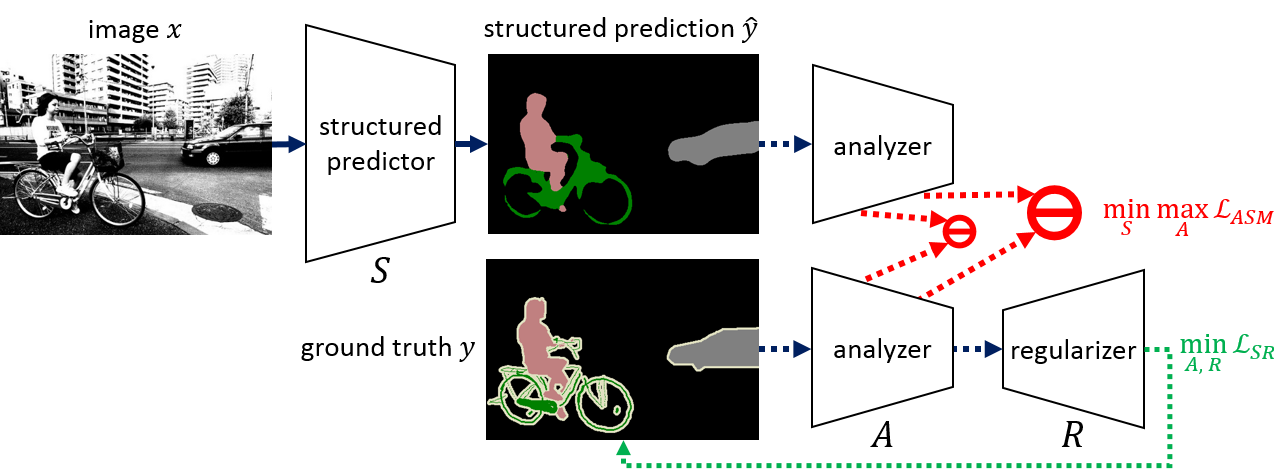}
    \caption{Framework overview: A structure analyzer extracts structure features (red arrows) from structured outputs. The analyzer is trained to maximize an adversarial structure matching (ASM) loss, or discrepancy between multi-scale structure features extracted from ground truth and from predictions of a structured predictor. The analyzer thus learns to exaggerate the hard negative structural mistakes and to distinguish fine-grained structures. The structured predictor on the contrary is trained to minimize the ASM loss. To make sure the filters in the analyzer form a good structure basis, we introduce a structure regularizer, which together with the analyzer, constitutes an autoencoder. The autoencoder is trained to reconstruct ground truth with a structure regularization (SR) loss. Dotted lines denote computations during training only.}
    \label{fig:architecture}
\end{figure*}

\noindent
\textbf{Semantic Segmentation.}
%
% In this section, we review the literature of semantic segmentation and its related tasks. The success in object recognition using deep convolutional networks~\cite{krizhevsky2012imagenet, simonyan2014very, he2016deep, ke2016neural} have been extended to pixel-wise recognition tasks such as semantic segmentation, pose and contour detection~\cite{long2015fully, chen2016deeplab} by replacing fully connected layers with convolution layers.
% Due to the strides in image-centric recognition networks, some attempt to increase the final resolution and meanwhile maintain the thin structures. For example, ~\cite{yu2015multi, chen2016deeplab} utilized dilated convolution kernel to enlarge effective receptive fields; ~\cite{noh2015learning, ronneberger2015u} proposed to learn a deconvolution network to reconstruct fine predictions from coarse ones.
%
The field of semantic segmentation has progressed fast in the last few years since the introduction of fully convolutional networks~\cite{long2015fully,chen2016deeplab}. Both deeper~\cite{zhao2016pyramid,li2017not} and wider~\cite{noh2015learning,ronneberger2015u,yu2015multi} network architectures have been proposed and have dramatically boosted the performance on standard benchmarks like PASCAL VOC 2012~\cite{everingham2010pascal}. Notably, multi-scale context information emerges to remedy limited receptive fields, e.g., spatial pyramid pooling~\cite{zhao2016pyramid} and atrous spatial pyramid pooling~\cite{chen2017rethinking}.
Though these methods yield impressive performance w.r.t. mIoU, they fail to capture rich structure information present in natural scenes.

\vspace{6pt}

\noindent
\textbf{Structure Modeling.}
To overcome the aforementioned drawback, researchers have explored several ways to incorporate structure information~\cite{krahenbuhl2011efficient,chen2015learning,zheng2015conditional,liu2015semantic,lin2016efficient,bertasius2016convolutional,xie2016top,gygli2017value,ke2018adaptive,MMS:CVPR:2018}.
For example, Chen et al.~\cite{chen2016deeplab} utilized denseCRF~\cite{krahenbuhl2011efficient} as post-processing to refine the final segmentation results.
Zheng et al.~\cite{zheng2015conditional} and Liu et al.~\cite{liu2015semantic} further made the CRF module differentiable within the deep neural network.
Besides, low-level cues, such as affinity~\cite{shi2000normalized,maire2016affinity,liu2017learning,bertasius2016convolutional} and contour~\cite{bertasius2016semantic,chen2016semantic} have also been leveraged to encode image structures.
However, these methods either are sensitive to appearance changes or require expensive iterative inference.

\vspace{6pt}

\noindent
\textbf{Monocular Depth Estimation and Normal Prediction.}
With large-scale RGB-D data available \cite{silberman2012indoor,2D-3D-S,zamir2018taskonomy}, data-driven approaches \cite{karsch2014depth,kendall2017uncertainties,ladicky2014pulling,laina2016deeper,li2017two,liu2016learning,roy2016monocular,eigen2014depth,fouhey2013data} based on deep neural networks make remarkable progress on depth estimation and surface normal prediction. Just like in semantic segmentation, some incorporate structural regularization such as CRFs into the system and demonstrate notable improvements \cite{kim2016unified,li2015depth,liu2015deep,wang2015towards,wang2016surge,xu2017multi,zhuo2015indoor,cheng2018depth}. Recently, leveraging depth estimation and semantic segmentation for each other has emerged to be a promising direction to improve structured predictions of both tasks \cite{couprie2013indoor,eigen2015predicting,gupta2014learning,ren2012rgb,silberman2012indoor}.
Others propose to jointly train a network for both tasks \cite{misra2016cross,richter2017playing,kokkinos2017ubernet,zhang2018joint,xu2018pad}. Our work is also along this direction as the ASM framework can train different structured prediction tasks in a consistent manner.

\section{Method}
\label{sec:method}

We provide an overview of the framework in Fig.~\ref{fig:architecture} and summarize the training procedure in Alg.~\ref{alg}.

\subsection{Adversarial Structure Matching}
\label{sec:asml}

We consider structured prediction tasks, in which a structured prediction network (structured predictor) $S: {\bf x} \mapsto {\bf \hat{y}}$, which usually is a deep CNN, is trained to map an input image ${\bf x} \in \mathbb{R}^n$ to a per-pixel label mask ${\bf \hat{y}} \in \mathbb{R}^n$.
We propose to train such a structured predictor with another network, structure analyzer. The analyzer $A:\mathbb{R}^n\mapsto \mathbb{R}^k$ extracts $k$-dimensional multi-layer structure features from either ground truth masks, denoted as $A({\bf y})$, or predictions, denoted as $A(S({\bf x}))$. We train the analyzer to maximize the distance between the structure features from either inputs, so that it learns to exaggerate structural mistakes, or hard negative structural examples, made by the structured predictor. On the contrary, we simultaneously train the structured predictor to minimize the same distance. In other words, structured predictor $S$ and structure analyzer $A$ play the following two-player minimax game with value function $V(S, A)$:
\begin{equation}
    \min_S \max_A V(S,A) = \mathbb{E}_{\bf x, y} \left[\frac{1}{2} \norm{A\left(S({\bf x})\right) - A({\bf y})}^2 \right],
\end{equation}
that is, we prefer the optimal structured predictor as the one that learns to predict true structures to satisfy the analyzer. Note that the analyzer will bias its discriminative power towards similar but subtly different structures as they occur more frequently through the course of training.

One might relate this framework to GAN~\cite{goodfellow2014generative}. A critical distinction is that GAN tries to minimize the data distributions between real and fake examples and thus accepts a set of solutions. Here, structured prediction tasks require a specific one-to-one mapping of each pixel between ground truth masks and predictions. Therefore, the discrimination of structures should take place for every patch between corresponding masks, hence the name adversarial structure matching (ASM).

It is also related to perceptual loss~\cite{gatys2015neural,zhang2018unreasonable} for style transfer, which uses pretrained CNN features to capture image statistics. We, on the other hand, generalize this concept by adapting the CNN and accepting any dimensional inputs.

\subsection{Global Optimality of $S({\bf x})={\bf y}$ and Convergence}
\label{sec:thoery}

We would like the structured predictor to converge to a good mapping of ${\bf y}$ given ${\bf x}$, if given enough capacity and training time. To simplify the dynamic of convergence, we consider both structured predictor $S$ and analyzer $A$ as models with infinite capacity in a non-parametric setting.

\begin{proposition}
For a fixed $S$, if $S({\bf x})\neq{\bf y}$, then $\norm{A^*\left(S({\bf x})\right) - A^*({\bf y})}^2$ is infinitely large for an optimal $A$.
\end{proposition}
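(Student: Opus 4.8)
The plan is to exploit the unbounded capacity of $A$: since the analyzer is a free function $\mathbb{R}^n \mapsto \mathbb{R}^k$ with no constraints in this non-parametric setting, and since $V(S,A)$ is being \emph{maximized} over $A$, I would show that whenever the predictor errs on even a single input-output pair, the analyzer can drive the squared feature distance to $+\infty$. The key observation is that the value function integrates (via the expectation) a nonnegative quantity $\frac{1}{2}\norm{A(S({\bf x})) - A({\bf y})}^2$, and an omnipotent $A$ is free to assign arbitrary feature vectors to the two distinct masks ${\bf y}$ and $S({\bf x})$.

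First I would fix the predictor $S$ and suppose $S({\bf x}) \neq {\bf y}$ on a set of inputs with positive probability, so that the two masks are genuinely different as points in $\mathbb{R}^n$. Because $A$ is an unconstrained map, I can treat $A({\bf y})$ and $A(S({\bf x}))$ as independently assignable outputs for distinct inputs. Then I would construct (or invoke the existence of) a sequence of analyzers $A_m$ that agree on sending these two distinct masks to feature vectors whose separation grows without bound, e.g. $\norm{A_m({\bf y}) - A_m(S({\bf x}))} \to \infty$. Substituting into the value function, the expectation is bounded below by the contribution from the positive-probability error set, which itself diverges; hence $\sup_A V(S,A) = +\infty$, and correspondingly the optimal $A^*$ yields an infinite squared distance. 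This matches the claim that $\norm{A^*(S({\bf x})) - A^*({\bf y})}^2$ is infinitely large.

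The main obstacle I anticipate is making the notion of ``optimal $A$'' precise when the supremum is not attained by any finite-valued function. A clean treatment would either (i) phrase the conclusion as a divergence of the supremizing sequence rather than the value at a literal maximizer, or (ii) argue that the well-definedness of a finite-valued $A^*$ forces $S({\bf x}) = {\bf y}$ almost everywhere, reading the proposition contrapositively. I would also need to check that $A$'s freedom to separate ${\bf y}$ from $S({\bf x})$ is not secretly coupled across inputs in a way that the expectation penalizes; in the non-parametric idealization this coupling vanishes, but I would state that assumption explicitly. The payoff, which I expect the authors to use next, is that any \emph{finite} optimum of the full minimax game must satisfy $S({\bf x}) = {\bf y}$, establishing global optimality of the ground-truth predictor.
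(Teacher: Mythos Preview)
Your proposal is correct and follows essentially the same logic as the paper: both exploit the unbounded capacity of $A$ to drive the squared feature distance to infinity whenever $S({\bf x})\neq{\bf y}$. The only cosmetic difference is that the paper exhibits an explicit one-coordinate linear map $A_l(z)=w_i\,z[i]$ on an index $i$ where the masks differ and lets $|w_i|\to\infty$, whereas you argue abstractly via a separating sequence $A_m$; your added caveats about the supremum not being attained by a finite $A^*$ are in fact more careful than the paper's own treatment.
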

\begin{proof}
If $S({\bf x})\neq{\bf y}$, there exists an index $i$ such that $S({\bf x})[i]-y[i] = \epsilon$, where $\epsilon \in \mathbb{R} \setminus \{0\}$. Without loss of generality, we assume $S({\bf x})[j] = y[j]$ if $j\neq i$ and let $S({\bf x})[i] = c + \frac{1}{2}\epsilon$ and $y[i] = c - \frac{1}{2}\epsilon$.

We consider a special case where $A_l$ on the $i$-th dimension of the input is a linear mapping, i.e., $A_l(x[i])=w_ix[i]$. As $A$ is with infinite capacity, we know there exists $A$ such that
\begin{align}
\nonumber \norm{A\left(S({\bf x})\right) - A({\bf y})}^2 \ge
    \norm{A_l\left(S({\bf x})\right) - A_l({\bf y})}^2 = \\
    \norm{w_i\left(c + \frac{1}{2}\epsilon\right) - w_i\left(c - \frac{1}{2}\epsilon\right)}^2=|w_i\epsilon|
\end{align}
Note that $\norm{A_l\left(S({\bf x})\right) - A_l({\bf y})}^2 \to \infty$ as $|w_i|\to \infty$. Thus $\norm{A^*\left(S({\bf x})\right) - A^*({\bf y})}^2\to \infty$.
\end{proof}
In practice, parameters of $A$ are restricted within certain range under weight regularization so $\norm{A^*\left(S({\bf x})\right) - A^*({\bf y})}^2$ would not go to infinity.

\begin{corollary}
For an optimal $A$, $S({\bf x})={\bf y}$ if and only if $A^*(S({\bf x}))=A^*({\bf y})$.
\end{corollary}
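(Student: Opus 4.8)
The plan is to split the biconditional into its two implications and handle each on its own; the forward direction is pure substitution, whereas the reverse direction is nothing more than the contrapositive of the preceding Proposition. Since everything can be argued pointwise for the fixed $S$ under consideration, no integration or expectation needs to be touched.

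First I would dispatch the forward implication $S({\bf x})={\bf y}\Rightarrow A^*(S({\bf x}))=A^*({\bf y})$. This is immediate: $A^*$ is a well-defined map, so feeding it two identical arguments produces two identical outputs. Worth noting is that this direction uses no property of $A^*$ beyond being a function, and in particular does not invoke optimality at all.

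Next I would establish the reverse implication $A^*(S({\bf x}))=A^*({\bf y})\Rightarrow S({\bf x})={\bf y}$ by contraposition. Assuming $S({\bf x})\neq{\bf y}$, the Proposition guarantees that for the optimal (maximizing) analyzer, $\norm{A^*(S({\bf x}))-A^*({\bf y})}^2$ is infinitely large, hence in particular strictly positive; a strictly positive squared norm forces $A^*(S({\bf x}))\neq A^*({\bf y})$. Taking the contrapositive delivers exactly the reverse implication, and combining the two directions closes the equivalence.

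The main thing to watch is logical rather than computational. I must use the same optimal $A^*$ in both directions and stay mindful of the asymmetry that the forward direction holds for any admissible $A$ while the reverse direction rests on the Proposition's optimality assumption. The only other nicety is the step from ``infinitely large'' to ``nonzero'': infinity trivially implies a positive value, so the inequality of the two feature vectors follows without incident, and there is no genuine obstacle here beyond stating this cleanly.
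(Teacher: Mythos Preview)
Your proposal is correct and mirrors the paper's own proof: the forward direction is handled by direct substitution (the paper phrases it via $\norm{A({\bf y})-A({\bf y})}^2=0$, which is the same observation), and the reverse direction is obtained by contraposition/contradiction from Proposition~1 exactly as you describe. Your remark that the forward implication does not require optimality is also noted in the paper.
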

\begin{proof}
$\Rightarrow$ If $S({\bf x})={\bf y}$, $\norm{A\left(S({\bf x})\right) - A({\bf y})}^2 = \norm{A({\bf y})-A({\bf y})}^2 = 0$, for any $A$. Hence $\norm{A^*\left(S({\bf x})\right) - A^*({\bf y})}^2 = 0$. \\
$\Leftarrow$ If $A^*(S({\bf x}))=A^*({\bf y})$ or $\norm{A^*\left(S({\bf x})\right) - A^*({\bf y})}^2=0$, $S({\bf x})\neq{\bf y}$ contradicts Proposition 1. Hence $S({\bf x})={\bf y}$.
\end{proof}

\begin{theorem}
If ($S^*, A^*$) is a Nash equilibrium of the system, then $S^*({\bf x})={\bf y}$ and $V(S^*,A^*)=0$
\end{theorem}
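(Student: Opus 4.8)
The plan is to exploit the two best-response conditions that define a Nash equilibrium of the minimax game $\min_S \max_A V(S,A)$: namely, that $A^*$ maximizes $V(S^*,A)$ over all analyzers $A$, and that $S^*$ minimizes $V(S,A^*)$ over all predictors $S$. I would first pin down the value $V(S^*,A^*)$, and then use the optimality of $A^*$ together with Proposition~1 to force $S^*({\bf x})={\bf y}$.

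First I would establish that $V(S^*,A^*)=0$. Consider the \emph{ideal} predictor $S'$ defined by $S'({\bf x})={\bf y}$ for every input. Since its integrand is $\frac{1}{2}\norm{A^*({\bf y}) - A^*({\bf y})}^2 = 0$, we get $V(S',A^*)=0$. Because $S^*$ is a best response to $A^*$ (it minimizes $V(\cdot,A^*)$), we have $V(S^*,A^*)\le V(S',A^*)=0$. On the other hand, $V$ is an expectation of a squared norm and hence nonnegative, so $V(S^*,A^*)\ge 0$. Combining the two bounds yields $V(S^*,A^*)=0$.

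Next I would show $S^*({\bf x})={\bf y}$ by contradiction. Suppose instead that $S^*({\bf x})\neq{\bf y}$ on a set of inputs of positive probability. Because $A^*$ is a best response to $S^*$ (it maximizes $V(S^*,A)$), we have $V(S^*,A^*)\ge V(S^*,A)$ for every analyzer $A$. But Proposition~1 guarantees that, for such mismatched inputs, an optimal analyzer drives $\norm{A(S^*({\bf x})) - A({\bf y})}^2$ arbitrarily large; choosing such an $A$ makes $V(S^*,A)$ unbounded, contradicting $V(S^*,A^*)=0$. Hence $S^*({\bf x})={\bf y}$. Equivalently, once $V(S^*,A^*)=0$, the nonnegative integrand must vanish, giving $A^*(S^*({\bf x}))=A^*({\bf y})$, and then the $\Leftarrow$ direction of Corollary~1 — which invokes the optimality of $A^*$ — delivers $S^*({\bf x})={\bf y}$ directly.

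The main obstacle I anticipate is the interaction between the expectation over $({\bf x},{\bf y})$ and the pointwise reasoning of Proposition~1: I must argue that a single analyzer can simultaneously exaggerate the mistakes on the whole positive-measure set of mismatched inputs so that the \emph{expectation} — not merely the integrand at one point — blows up, and I must take care that the best-response property of $A^*$ is read against this same infinite-capacity class so that Proposition~1 applies. The infinite-capacity assumption on $A$ is precisely what legitimizes this step; under the weight regularization used in practice, the blow-up is replaced by a large-but-finite penalty, which is the caveat already noted after Proposition~1.
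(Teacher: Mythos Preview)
Your proposal is correct and follows essentially the same approach as the paper's proof, which likewise combines Proposition~1, Corollary~1, and the nonnegativity of $V$ to conclude that the equilibrium forces $S^*({\bf x})={\bf y}$ and $V(S^*,A^*)=0$. Your version is simply more explicit about invoking the two best-response conditions defining a Nash equilibrium and about the passage from pointwise blow-up to the expectation, whereas the paper compresses these steps into a couple of sentences.
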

\begin{proof}
From Proposition 1, we proved $V(S,A^*)\to \infty$ if $S({\bf x})\neq{\bf y}$. From Corollary 1, we proved $V(S,A^*)=0$ if and only if $S({\bf x})={\bf y}$. Since $V(S,A)\ge 0$ for any $S$ and $A$, the Nash equilibrium only exists when $S^*({\bf x})={\bf y}$, or $V(S^*,A^*)=0$.
\end{proof}

From the proofs, we recognize imbalanced powers between the structured predictor and structure analyzer where the analyzer can arbitrarily enlarge the value function if the structured predictor is not optimal. In practice, we should limit the training of the analyzers or apply regularization, such as weight regularization or gradient capping to prevent gradient exploding. Therefore, we train the analyzer only once per iteration with a learning rate that is not larger than the one for structured predictor. Another trick for semantic segmentation is to binarize the predictions $S(x)$ (winner-take-all across channels for every pixel) before calculating ASM loss for analyzer. In this way, the analyzers will focus on learning to distinguish the structures instead of the confidence levels of predictions.

\begin{algorithm}[t]
\SetAlgoLined
\SetNoFillComment
\For{number of training iterations} {
    \tcc{Train structure analyzer}
    
    (optional) Binarize structured predictions $S({\bf x}^{(i)})$. \\
    Update the structure analyzer $A$ and regularizer $R$ by ascending its stochastic gradient:
    \begin{multline}
    \nonumber \nabla_{\theta_{A,R}}\frac{1}{m}\sum_{i=1}^m \Big[\frac{1}{2}\norm{ A\left(S({\bf x}^{(i)}) \right) - A({\bf y}^{(i)}) }^2\\
    \nonumber + \lambda IID\left({\bf y}, R\left(A_t({\bf y})\right)\right) \Big]
    \end{multline}
    
    \tcc{Train structured predictor}
    Sample a minibatch with $m$ images $\{ {\bf x}^{(1)}, \dots, {\bf x}^{(m)} \}$ and ground truth masks $\{ {\bf y}^{(1)}, \dots, {\bf y}^{(m)} \}$. \\
    Update the structured predictor $S$ by descending its stochastic gradient:
    $$\nabla_{\theta_S}\frac{1}{2m}\sum_{i=1}^m \norm{ A\left(S({\bf x}^{(i)})\right)-A({\bf y}^{(i)}) }^2 $$
    }
The gradient-based updates can use any standard gradient-based learning rule. Structure analyzer $A$ should use a learning rate that is not larger than structured predictor $S$.
\caption{Algorithm for training structured prediction networks using ASM.}
\label{alg}
\end{algorithm}

\subsection{Reconstructing ${\bf y}$ as Structure Regularization}
\label{sec:regularization}

Although theoretically structure analyzers would discover any structural difference between predictions and ground truth, randomly initialized analyzers suffer from missing certain structures in the early stage. For example, if filter responses for a sharp curve are initially very low, ASM loss for the sharp curve will be as small, resulting in inefficient learning. This problem will emerge when training both structured predictors and structure analyzers from scratch. To alleviate this problem, we propose a regularization method to stabilize the learning of analyzers.

One way to ensure the filters in the analyzer form a good structure basis is through reconstructing ground truth, which contains complete structures. If filters in the analyzer fail to capture certain structures, the ground truth mask cannot be reconstructed. Hence, we append a structure regularizer on top of structure analyzer to constitute an autoencoder. we denote the structure regularizer $R:A_t({\bf y}) \mapsto {\bf y}$, where $A_t(\cdot)$ denotes features from the structure analyzer, which are not necessarily the same set of features for ASM; hence the reconstruction mapping: $R\left(A_t({\bf y})\right) \mapsto {\bf y}$. As a result, the final objective function is as follows
\begin{align}
\nonumber  S^* =& \underbrace{\argmin_S  \max_A \mathbb{E}_{\bf x, y} \left[\frac{1}{2}\norm{A\left(S({\bf x})\right) - A({\bf y})}^2\right]}_\text{adversarial structure matching loss} \\
+& \underbrace{\min_{A, R} \lambda \mathbb{E}_{\bf y} \left[IID\left({\bf y}, R\left(A_t({\bf y})\right)\right) \right]}_\text{structure regularization loss},
\end{align}
where $IID\left({\bf y}, R\left(A_t({\bf y})\right)\right)$ is defined for target tasks as: 
\begin{align*}
&\text{Semantic segmentation:} -{\bf y} \cdot \log R\left(A_t({\bf y})\right) \\
&\text{Depth estimation:} \norm{y-R\left(A_t({\bf y})\right)}^2 \\
&\text{Surface normal Prediction:} \norm{\frac{y}{\norm{y}}-\frac{R\left(A_t({\bf y})\right)}{\norm{R\left(A_t({\bf y})\right)}}}^2
\end{align*}
%$-{\bf y} \cdot \log R\left(A_t({\bf y})\right)$ for semantic segmentation, $\norm{y-R\left(A_t({\bf y})\right)}^2$ for depth estimation, and $\norm{\frac{y}{\norm{y}}-\frac{R\left(A_t({\bf y})\right)}{\norm{R\left(A_t({\bf y})\right)}}}^2$ for surface normal prediction.
Note that structure regularization loss is independent to $S$.

\begin{figure*}
    \centering
    \includegraphics[width=1.0\linewidth]{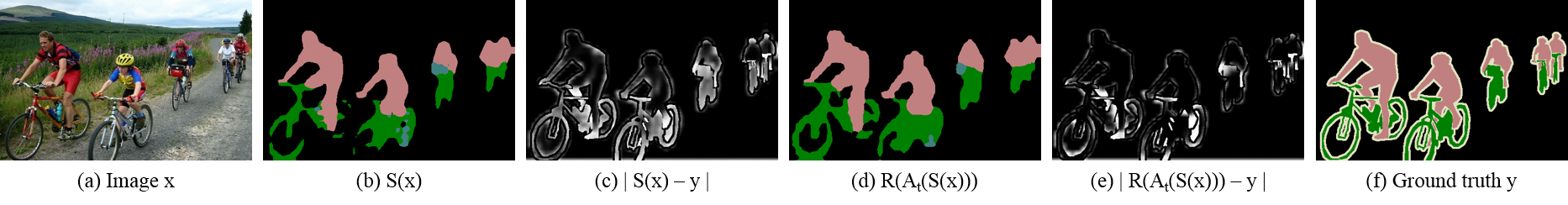}
    \caption{Error maps of predictions and reconstructions by the structure autoencoder on VOC PASCAL 2012. The autoencoder network is able to reconstruct the missing part of certain structures, for example, it completes the round wheel in the front. Note that neither the structure analyzer nor regularizer has access to input images.}
    \label{fig:seg_errors}
\end{figure*}

\begin{figure*}
    \centering
    \includegraphics[width=1.0\linewidth]{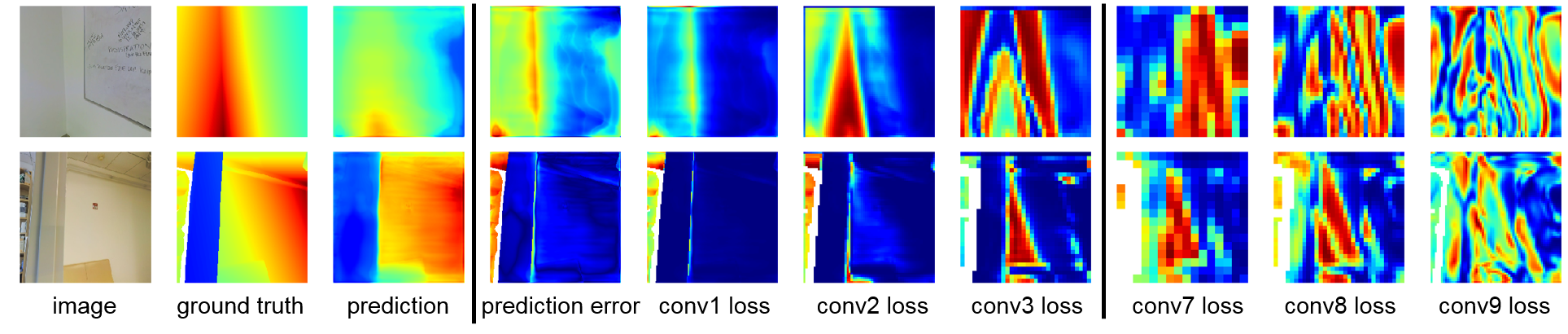}
    \caption{Visualization of loss maps from different layers in the structure analyzer. Examples are from monocular depth estimation on 2D-3D-Semantics~\cite{2D-3D-S}. We observe earlier layers capture lower-level structural mistakes as opposed to later layers. It is worth to note that $conv9$, with skip connection from $conv1$, provides low-level structure errors guided by global information. We observe that multi-scale structure supervision is achieved by the multi-layer structure analyzer. }
    \label{fig:analyzer_analysis}
\end{figure*}

\section{Experiments}
\label{sec:exp}

We demonstrate the effectiveness of our proposed methods on three structure modeling tasks: semantic segmentation, monocular depth estimation, and surface normal prediction. We conduct extensive experiments on all three tasks to compare the same structured prediction networks trained using ASM, GAN, or IID losses. (Note that GAN means training with IID loss and adversarial loss.)

We first give an overview of the datasets and implementation details in Sec. \ref{sec:exp_setup}. Then we analyze what is learned in the structure analyzer to understand the mechanics of ASM in Sec. \ref{sec:exp_analysis}. Finally, we present our main results and analyses on segmentation in Sec. \ref{sec:exp_seg} and on depth estimation and surface normal prediction in Sec. \ref{sec:exp_depth}.

\subsection{Experimental Setup}
\label{sec:exp_setup}

\noindent
\textbf{Tasks and datasets.}
We compare our proposed ASM against GAN and IID losses on the Weizmann horse~\cite{borenstein2002horse}, PASCAL VOC 2012~\cite{everingham2010pascal}, and Stanford 2D-3D-Semantics \cite{2D-3D-S} datasets. The {\bf Weizmann horse} \cite{borenstein2002horse} is a relatively small dataset for figure-ground segmentation that contains $328$ side-view horse images, which are split into $192$ training and $136$ validation images. The {\bf VOC 2012} \cite{everingham2010pascal} dataset is a well-known benchmark for generic semantic segmentation which includes $20$ object classes and a `background' class, containing $10,582$ and $1,449$ images for training and validation, respectively. We also conduct experiments on {\bf 2D-3D-Semantics}~\cite{2D-3D-S} dataset for monocular depth and surface normal prediction. The 2D-3D-S dataset is a large-scale indoor scene benchmark, which consists of $70,496$ images, along with depth, surface normal, instance- and semantic-level segmentation annotations. The results are reported over fold-1 data splits--Area 1,2,3,4,6 ($52,903$ images) for training and Area 5 ($17,593$ images) for testing.

\vspace{6pt}
\noindent
\textbf{Architectures.} For all the structure autoencoders (i.e., analyzer and regularizer), we use U-Net~\cite{ronneberger2015u} with either 7 conv layers for figure-ground segmentation, 5 conv layers for semantic segmentation and 9 conv layers for depth and surface normal estimation.
We conduct experiments on different structured prediction architectures. On horse dataset~\cite{borenstein2002horse}, we use U-Net~\cite{ronneberger2015u} (with 7 convolutional layers) as the base architecture. On VOC~\cite{everingham2010pascal} dataset, we carry out experiments and thorough analyses over $3$ different architectures with ResNet-101~\cite{he2016deep} backbone, including FCN~\cite{long2015fully}, DeepLab~\cite{chen2016deeplab}, and PSPNet~\cite{zhao2016pyramid}, which is a highly competitive segmentation model. On 2D-3D-S \cite{2D-3D-S} dataset, we use U-Net \cite{ronneberger2015u} (with 18 convolutional layers) as the base architecture for both depth and surface normal prediction. Aside from base architectures, neither extra parameters nor post-processing are required at inference time.

\vspace{6pt}
\noindent
\textbf{Implementation details on Weizmann horse.}
We use the poly learning rate policy where the current learning rate equals the base one multiplied by  $(1-\frac{\text{iter}}{\text{max\_iter}})^{0.9}$ with max iterations as $100$ epochs. We set the base learning rate as $0.0005$ with Adam optimizer for both $S$ and $A$. Momentum and weight decay are set to $0.9$ and $0.00001$, respectively. We set the batch size as $1$ and use only random mirroring. For ASM, We set $\lambda=2$ for structure regularization.

\vspace{6pt}
\noindent
\textbf{Implementation details on VOC dataset.}
Our implementation follows the implementation details depicted in ~\cite{chen2017rethinking}. We adopt the same poly learning rate policy and set the base learning rate with SGD optimizer as $0.001$  for $S$ and $0.0005$ for $A$. The training iterations for all experiments on all datasets are $30$K. Momentum and weight decay are set to $0.9$ and $0.0005$, respectively.  For data augmentation, we adopt random mirroring and random resizing between 0.5 and 2 for all
datasets. We do not use random rotation and random Gaussian blur. We do not upscale the logits (prediction map) back to the input image resolution, instead, we follow  \cite{chen2016deeplab}'s setting by downsampling the ground-truth labels for training ($output\_stride=8$). The crop size is set to $336 \times 336$ and batch size is set to $8$. We update BatchNorm parameters with $decay=0.9997$ for ImageNet-pretrained layers and $decay=0.99$ for untrained layers. For ASM, we set $\lambda=10$ for structure regularization.

\vspace{6pt}
\noindent
\textbf{Implementation details on 2D-3D-S.}
We implement pixel-wise L2 loss and normalized L2 for depth and surface normal prediction as a counterpart to cross-entropy loss for segmentation. We adopt the same poly learning rate policy, and set the base learning rate to $0.01$ for both $S$ and $A$. While the original image resolution is $1080 \times 1080$, we down-sample image to half resolution, set ``cropsize'' to $512 \times 512$ for training, and keep $540 \times 540$ for testing. We use random cropping and mirroring for data augmentation. The ``batchsize'' is set to $8$, weight decay is set to $0.0005$, and models are trained for $120$K iterations from scratch.

\subsection{Analyses of Structure Analyzer}
\label{sec:exp_analysis}

Before getting to the main results, we verify the structure analyzer actually captures certain structures and visualize what kind of supervisory signals it provides for training the structured predictor.

For segmentation, some objects are rigid and certain shapes are important structures, \eg, rectangles for buses, circles for wheels, and parallel lines for poles. Here, we visualize the prediction, reconstruction, and their error maps in Fig. \ref{fig:seg_errors}. We observe that that the structure autoencoder network is able to reconstruct the missing part of certain structures, for example, it completes round wheels of bikes. Note that neither the structure analyzer nor regularizer has access to input images.

We further analyze the losses from each layer in the structure analyzer for monocular depth estimation, shown in Fig. \ref{fig:analyzer_analysis}. We observe that in the early layers where information is local, the errors are near object boundaries. In the middle layers, \eg, layer 2 and 3, the structure analyzer attends to broader context, such as the orientation of walls. In the final layer ($conv9$, with skip connection from $conv1$), low-level structure errors are guided by global information. We observe that multi-scale structure supervision is achieved by the multi-layer structure analyzer.

To understand what's learned in the analyzer, we are inspired by Zhou {\it et al.}~\cite{zhou2018interpreting} to visualize features in a CNN by finding the patches that maximally activate a targeted filter. We show in Fig.~\ref{fig:rebut_vis} the top 10 input stimuli for three filters in different layers for each task. We observe that each filter in the analyzer attends to a specific structure, {\it e.g.}, chairs, bottles, hallways, bookcases, {\it etc.}.

\begin{figure}
    \centering
    \includegraphics[width=1.0\linewidth]{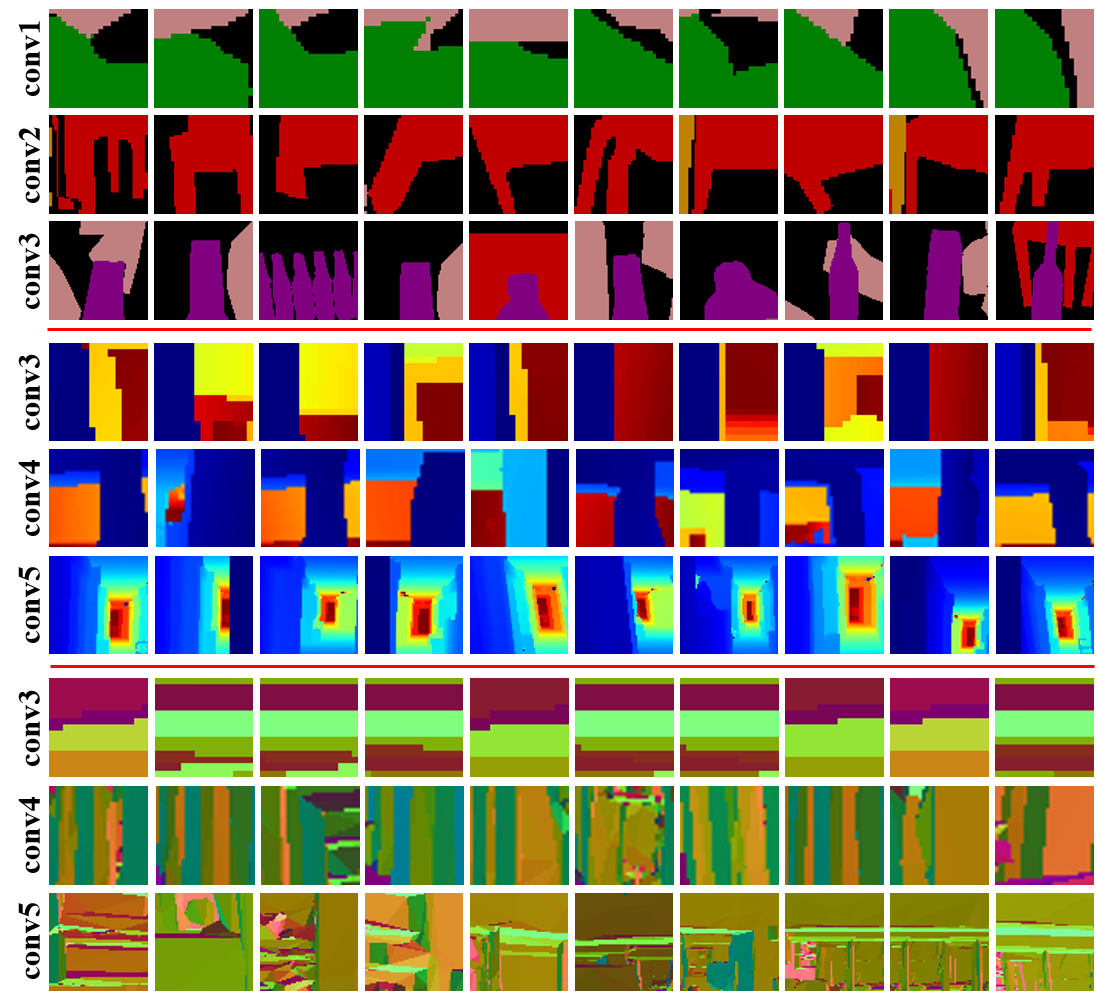}
    \caption{Top 10 input stimuli (in a row) for a filter in the analyzer with maximal activations for semantic segmentation (top 3 rows), depth estimation (middle 3 rows), and surface normal prediction (bottom 3 rows).}
    \label{fig:rebut_vis}
\end{figure}

\subsection{Main Results on Segmentation}
\label{sec:exp_seg}

\begin{table}
\centering
\begin{tabular}{|l||c|c|}
\hline
    \textbf{Loss}  & \textbf{Horse mIoU (\%)} & \textbf{VOC mIoU (\%)} \\ \hline 
    IID & 77.28 & 68.91\\ \hline \hline
    ASM (Conv1) & 78.14 & 70.00 \\ \hline 
    ASM (Conv2) & 78.15 & 70.70 \\ \hline
    ASM (Conv1-2) & \textbf{79.62} & 71.60 \\ \hline
    ASM (Conv3) & 77.79 & 70.85 \\ \hline
    ASM (Conv1-3) & 78.11 & 69.81 \\ \hline \hline
    ASM w/o rec. & 77.83 & \textbf{72.14} \\ \hline
    ASM w/o adv. & 76.70 & 71.26 \\ \hline \hline
    IID+ASM & 78.34 & 68.49\\ \hline
\end{tabular}
\caption{Ablation studies of ASM on Weizmann horse dataset with U-Net and on PASCAL VOC dataset with FCN. Generally, using low- to mid-level features (conv1 and conv2) of structure analyzers yield the best performance. It also shows that reconstruction is not always needed if base networks are pre-trained. DeepLab or PSPNet (not shown here) has the same trend as U-Net.}
\label{tab:ablation}
\end{table}

\begin{table}
\centering
\begin{tabular}{|l||c|}
\hline
    \textbf{Base / Loss} & \textbf{mIoU (\%)} \\ \hline 
    FCN / IID & 68.91 \\ \hline
    FCN / ASM & \textbf{72.14} \\ \hline \hline
    DeepLab / IID & 77.54 \\ \hline
    DeepLab / ASM & \textbf{78.05} \\ \hline \hline
    PSPNet / IID & 80.12 \\ \hline
    PSPNet / cGAN & 80.67 \\ \hline
    PSPNet / GAN & 80.74 \\ \hline
    PSPNet / ASM & \textbf{81.43} \\ \hline
\end{tabular}
\caption{Experimental results on PASVAL VOC with several base models, FCN~\cite{long2015fully}, DeepLab~\cite{chen2016deeplab}, and PSPNet~\cite{zhao2016pyramid}. The improvements by replacing pixel-wise losses with ASM are consistent across different base models. cGAN \cite{pix2pix2017} denotes the discriminator is conditioned on the input.}
\label{tab:summary}
\end{table}
 
We evaluate both figure-ground and semantic segmentation tasks via mean pixel-wise intersection-over-union (denoted as \textbf{mIoU})~\cite{long2015fully}. We first conduct ablation studies on both datasets to thoroughly analyze the effectiveness of using different layers of structure features in the structure analyzer. As summarized in Table~\ref{tab:ablation}, using low- to mid-level features (from $conv1$ to $conv2$) of structure analyzers yields the highest performance, $79.62\%$ and $71.60\%$ mIoU on Weizmann horse dataset and VOC dataset, respectively). We also report mIoU on VOC dataset using different base architectures as shown in Table~\ref{tab:summary}. Our proposed method achieves consistent improvements across all three base architectures, boosting mIoU by $3.23\%$ with FCN, $0.51\%$ with DeepLab and $1.31\%$ with PSPNet. ASM is also $0.71\%$ higher than GAN (incorporated with IID loss) on VOC dataset. We show some visual comparison in Fig.~\ref{fig:result}.

\vspace{6pt}

\noindent {\bf Boundary Localization Improvement.}
We argue that our proposed method is more sensitive to complex shapes of objects. We thus evaluate boundary localization using standard contour detection metrics~\cite{amfm_pami2011}. The contour detection metrics compute the correspondences between prediction boundaries and ground-truth boundaries, and summarize the results with precision, recall, and f-measure. We compare the results with different loss functions: IID, GAN and ASM on VOC validation set. As shown in Figure~\ref{fig:boundary}, ASM outperforms both IID and GAN among most categories and overall in boundary precision. (Detailed numbers including recall and f-measure can be found in the supplementary.) The boundaries of thin-structured objects, such as `bike' and `chair', are much better captured by ASM.

\begin{figure}
    \centering
    \includegraphics[width=1.0\linewidth]{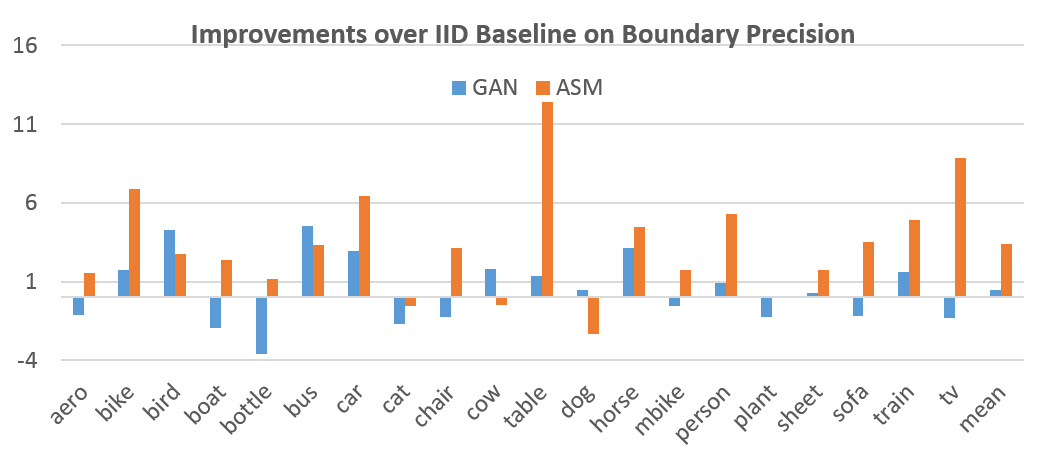}
    \caption{Improvements (\%) of per-class boundary precision of PSPNet~\cite{zhao2016pyramid} on PASCAL VOC 2012~\cite{everingham2010pascal} validation set. ASM outperforms IID and GAN in most categories.}
    \label{fig:boundary}
\end{figure}

\subsection{Main Results on Depth and Surface Normal}
\label{sec:exp_depth}

We compare ASM with pixel-wise L2 loss (IID) and cGAN~\cite{pix2pix2017} for monocular depth estimation. The performance is evaluated by the metrics proposed in ~\cite{eigen2014depth}, which are formulated as follows:

%\vspace{-20pt}
%\begin{align*}
%&\text{Abs Relative difference (rel)}: \frac{1}{|T|}\sum_{y \in T} |y-\hat{y}|/\hat{y}\\
%&\text{RMSE (rms)}: \sqrt{\frac{1}{|T|}\sum_{y \in T}\|y-\hat{y}\|^2}\\
%&\log_{10}: \sqrt{\frac{1}{|T|}\sum_{y \in T}\|\log y-\log\hat{y}\|^2}\\
%&\text{Threshold}: \% \text{ of } y_i \text{ s.t } \max(\frac{y_i}{\hat{y}_i}, \frac{\hat{y}_i}{y_i}) = \delta < thr
%\end{align*}

$\text{Abs Relative difference (rel)}: \frac{1}{|T|}\sum_{y \in T} |y-\hat{y}|/\hat{y}$

$\text{RMSE (rms)}: \sqrt{\frac{1}{|T|}\sum_{y \in T}\|y-\hat{y}\|^2}$

$\log_{10}: \sqrt{\frac{1}{|T|}\sum_{y \in T}\|\log y-\log\hat{y}\|^2}$

$\text{Threshold}: \% \text{ of } y_i \text{ s.t } \max(\frac{y_i}{\hat{y}_i}, \frac{\hat{y}_i}{y_i}) = \delta < thr$

%As shown in Table \ref{tab:depth}, our proposed method is $0.014$ and $0.044$ lower than cGAN \cite{pix2pix2017} by `rel' and `rms', and also yields the highest accuracy with threshold $1.25$ and $1.25^2$ ($54.0\%$ and $83.4\%$) on 2D-3D-Semantics \cite{2D-3D-S} dataset.
As summarized in Table \ref{tab:depth}, our proposed method consistently improves among most metrics, including `rel', `rms', and accuracy with different thresholds. Note that `rel', `$\log_{10}$', and `rms' metrics reflect the {\it mean} value of error distributions, which penalizes more on larger errors, which are usually incurred by mislabeling of the ground truth. On the other hand, accuracy with thresholds evaluates the {\it median} of errors, which reflects more precisely the visual quality of predictions. We notice that our method outperforms others even more significantly in {\it  median} errors with smaller thresholds, which demonstrates the fine-grained structure discriminator power of ASM. The conclusion is consistent with our observations from visual comparisons in Fig.~\ref{fig:result}.

\begin{table}
  \centering
  \footnotesize
  \setlength\tabcolsep{3.0pt}
  \begin{tabular}{|l|c c c|c c c|}
    \hline
     & rel & $\log_{10}$ & rms & $\delta < 1.25$ & $\delta < 1.25^2$ & $\delta < 1.25^3$\\
    \hline \hline
    $\mathrm{IID}$ & 0.267 & 0.393 & 1.174 & 0.466 & 0.790 & 0.922 \\
    cGAN & 0.266 & 0.371 & 1.123 & 0.509 & 0.816 & 0.932 \\
    ASM & \textbf{0.252} & 0.405 & \textbf{1.079} & \textbf{0.540} & \textbf{0.834} & 0.929 \\
    \hline
    % L2* & 0.241 & 0.341 & 1.046 & 0.530 & 0.854 & 0.951 \\
    % CGAN* & 0.247 & 0.362 & 1.095 & 0.509 & 0.831 & 0.939 \\
    % ASM* & 0.212 & 0.343 & 0.875 & 0.653 & 0.883 & 0.953 \\
    % \hline
  \end{tabular}
  \vspace{8pt}
  \caption{Depth estimation measurements on 2D-3D-S~\cite{2D-3D-S}. Note that lower is better for the first three columns, and higher is better for the last three columns. $\mathrm{IID}$ is L2.}
  \label{tab:depth}
\end{table}

We compare ASM with pixel-wise normalized L2 loss (IID) and cGAN~\cite{pix2pix2017} for surface normal prediction. To evaluate the prediction quality, we follow \cite{fouhey2013data} and report the results on several metrics, including the {\it mean} and {\it median} of error angles between the ground truth and prediction, and the percentage of error angles within $2.82^{\circ}$, $5.63^{\circ}$, $11.25^{\circ}$, $22.5^{\circ}$, $30^{\circ}$. The results are presented in Table \ref{tab:surface_normal}, and ASM improves metrics including {\it median} angles and percentage of angles within $2.82^{\circ}$, $5.63^{\circ}$, $11.25^{\circ}$ by large margin. Similar to accuracy with thresholds for depth estimation, metrics with smaller angles are more consistent with visual quality. We conclude that ASM captures most details and outperforms baselines in most cases. We observe in Fig.~\ref{fig:result} prominent visual improvements on thin structures, such as bookcase shelves. The surface normal prediction of larger objects, including wall and ceiling, are more uniform. Also, the contrast between adjacent surfaces are sharper with ASM.

\begin{table}
  \centering
  \footnotesize
  \setlength\tabcolsep{3.0pt}
  \begin{tabular}{|l|c c|c c c c c|}
    \hline
     & \multicolumn{2}{c|}{Angle Distance} & \multicolumn{5}{c|}{Within $t^{\circ}$ Deg.}\\
     & Mean & Median & $2.82^{\circ}$ & $5.63^{\circ}$ & $11.25^{\circ}$ & $22.5^{\circ}$ & $30^{\circ}$\\
    \hline \hline
    $\mathrm{IID}$ & 16.84 & 9.20 & 12.90 & 31.12 & 58.01 & 77.37 & 82.76\\
    cGAN & 17.12 & 9.04 & 12.28 & 31.52 & 58.52 & 77.00 & 82.32\\
    ASM & 16.98 & \textbf{8.28} & \textbf{17.58} & \textbf{35.48} & \textbf{63.29} & \textbf{78.50} & 82.47\\
    \hline
  \end{tabular}
  \vspace{8pt}
  \caption{Surface normal estimation measurements on 2D-3D-S \cite{2D-3D-S}. Note that lower is better for the first two columns, and higher is better for the last three columns. $\mathrm{IID}$ is normalized L2.}
%  \vspace{-12pt}
  \label{tab:surface_normal}
\end{table}

\vspace{6pt}
\noindent {\bf Instance- and Semantic-level Analysis.}
Just as mIoU metric in semantic segmentation, the metrics proposed by \cite{eigen2014depth} for depth and by \cite{fouhey2013data} for surface normal are biased toward larger objects. We thus follow \cite{ke2018adaptive} to evaluate instance-wise metrics to attenuate the bias and fairly evaluate the performance on smaller objects. We collect semantic instance masks on 2D-3D-S dataset and formulate the instance-wise metric as $\frac{\sum_{i \in I_c} M_{i,c}}{|I_c|}$, where $I_c$ denotes the set of instances in class $c$ and $M_{i,c}$ is the metric for depth or surface normal of instance $i$ in class $c$. (We do not use instance- or semantic-level information during training.) As shown in Fig~\ref{fig:inst_normal}, we demonstrate that ASM improves instance-wise percentage of angles within $11.25^{\circ}$ consistently among all categories. The instance-wise analysis for depth estimation can be found in the supplementary. 
%We show that ASM improves instance-wise accuracy with threshold $1.25$ on categories with wider range of depth value, such as `ceiling', `floor' and `window'. The results are presented as Fig~\ref{fig:inst_depth} in the Appendix.\\

\begin{figure}
    \centering
    \includegraphics[width=1.0\linewidth]{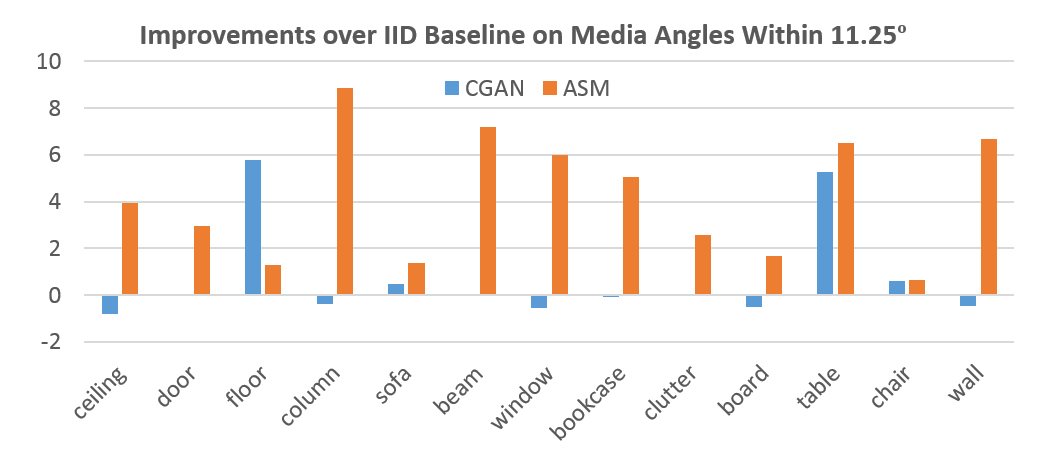}
    \caption{Improvements (\%) of instance-average surface normal metric on 2D-3D-S \cite{2D-3D-S}. The categories with most improvements are `column', `beam', `table', and `wall'.}
    \label{fig:inst_normal}
\end{figure}

\begin{figure}
    \centering
    \includegraphics[width=1.0\linewidth]{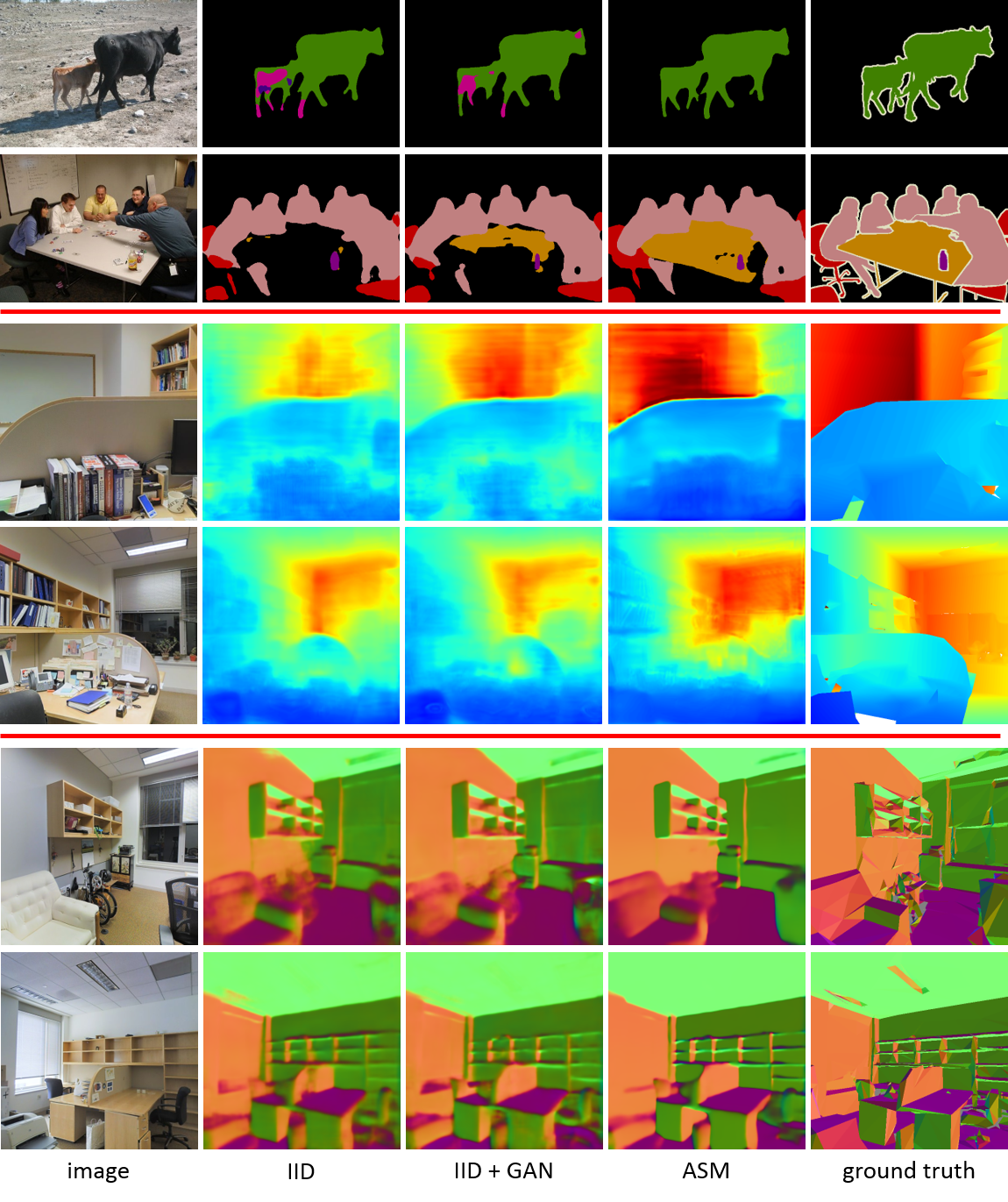}
    \caption{Visual quality comparison for semantic segmentation (top two rows) on VOC  \cite{everingham2010pascal} validation set, monocular depth estimation (middle two rows) and surface normal prediction (bottom two rows) on \cite{2D-3D-S}. Left to right: Images, predictions from training with IID, IID+(c)GAN, ASM, and ground truth masks.}
%    \vspace{-12pt}
    \label{fig:result}
\end{figure}

%\begin{figure}
%    \centering
%    \includegraphics[width=1.0\linewidth]{figs/stanford_single.png}
%    \caption{Visual quality comparison for depth (the first two rows) and surface normal (the last four rows) estimation tasks on 2D-3D-Semantics~\cite{2D-3D-S}. From left to right: Image, Ground-truth, UNet~\cite{ronneberger2015u} trained using L2 for depth or normalized L2 for surface normal, cGAN and ASM.}
%    \label{fig:stanford_single}
%\end{figure}

\section{Conclusion}
\label{sec:con}

We proposed a novel framework, Adversarial Structure Matching, for training structured prediction networks. We provided theoretical analyses and extensive experiments to demonstrate the efficacy of ASM. We concluded that multi-scale hard negative structural errors provide better supervision than the conventional pixel-wise IID losses (or incorporated with structure priors) in different structured prediction tasks, namely, semantic segmentation, monocular depth estimation, and surface normal prediction.\\

\noindent {\bf Acknowledgements.} This research was supported, in part, by Berkeley Deep Drive, NSF (IIS-1651389), DARPA, and US Government fund through Etegent Technologies on Low-Shot Detection in Remote Sensing Imagery.  The views, opinions and/or findings expressed should not be interpreted as representing the official views or policies of NSF, DARPA, or the U.S. Government.

%We discuss some training procedure alternatives that might improve the training speed or performance under certain circumstances.

%{\bf Single phase training.} To accelerate training, sampling mini-batches twice in each iteration to train $D$ and $S$ separately might be unnecessary. That is, we can sample once per iteration to train both $S$ and $D$. The performance would not decrease when the statistics of structural errors is not far from the one of single batch.

%{\bf Replace the segmentation network's sigmoid/softmax functions with normalization.} Usually the segmentation networks use sigmoid/softmax functions to output probability predictions. However, these functions plateau towards both ends, which results in gradient vanishing without logarithmic functions. Replacing them with simple normalization per image might improve performance.

%{\bf Incorporating with other loss functions.} Generally, incorporating CEL with ASML decreases the overall performance. Nonetheless, incorporating pairwise supervision such as the embedding contrastive loss or recently proposed adaptive affinity fields~\cite{ke2018adaptive} might further improve boundary precision.

\clearpage
{\small
\bibliographystyle{ieee}
\bibliography{egbib}
}

\clearpage
\section{Supplementary}
\label{sec:app}

\begin{figure}
    \centering
    \includegraphics[width=1.0\linewidth]{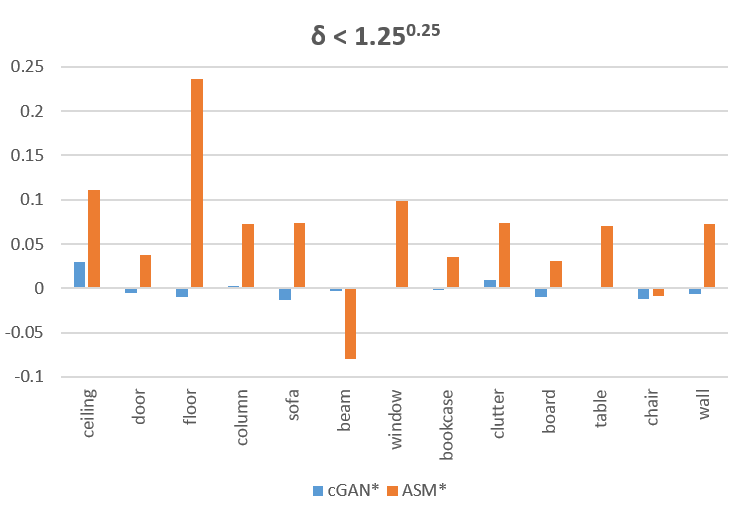}
    \caption{Improvements of cGAN and ASM over IID baseline with the instance-average depth metric on 2D-3D-Semantics~\cite{2D-3D-S} dataset. All networks are jointly trained with depth and surface normal estimation.}
    \label{fig:inst_depth_joint}
\end{figure}

\section{Joint Training of Depth and Surface Normal}

Aside from predicting depth and surface normal as two independent tasks, we also carry out experiments to jointly train both tasks. For the structured prediction network, we use the same architecture as the one for single task training. The only difference is that there are two output layers--one for depth estimation and the other for surface normal prediction. For the structure analyzer, we double the number of channels per hidden layer, and concatenate `depth' and `surface normal' after $conv1$. The structure analyzer also has two output layers for depth and surface normal prediction, respectively. The architecture detail is described in Section \ref{subsec:network_architecture}.

We conclude that ASM provides better supervisory signals, which take into consideration the joint structures of depth and surface normal, than $\mathrm{IID}$ losses, $L_2$ and normalized $L_2$ for depth and surface normal. As summarized in Table \ref{tab:app_depth}, our proposed method consistently improves all the metrics by a large margin. It means that the surface normal information can be used to guide depth estimation in ASM. We present some visual comparison in Figure \ref{fig:stanford_joint}. We notice prominent improvements on the object boundaries. Also, thin structures are more conspicuous.

To show the competitive performance of our baseline methods, we also train the high performing model FCRN~\cite{laina2016deeper} with ResNet-50~\cite{he2016deep} backbone pre-trained on ImageNet~\cite{krizhevsky2012imagenet} using $\mathrm{IID}$ losses.  Note that we do not pre-train our UNet~\cite{ronneberger2015u} on ImageNet~\cite{krizhevsky2012imagenet}. The results are summarized in Table~\ref{tab:app_depth}. We conclude that our UNet baseline is competitive with other architectures.

We also evaluate baselines and our proposed method by instance-wise metrics. We plot the improvement of each categories, and the results are presented in Figure~\ref{fig:inst_depth_joint}. Without training using instance- or semantic-level information, ASM outperforms IID baseline and cGAN among most categories.

\begin{table}
  \centering
  \footnotesize
  \setlength\tabcolsep{2.0pt}
  \begin{tabular}{|l|c c|c c c|}
    \hline
     & \multicolumn{2}{c|}{FCRN~\cite{laina2016deeper}} & \multicolumn{3}{c|}{UNet}\\
     & $\mathrm{IID}$ & $\mathrm{IID}$* & $\mathrm{IID}$* & cGAN* & ASM*\\
     \hline\hline
     rel & 0.228 & 0.225 & 0.241 & 0.247 & \textbf{0.215}\\
     $\log_{10}$ & 0.298 & 0.297 & 0.341 & 0.362 & \textbf{0.336}\\
     rms & 0.881 & 0.874 & 1.046 & 1.095 & \textbf{0.936}\\
     \hline
     $\delta < 1.25^{0.25}$ & 0.193 & 0.189 & 0.132 & 0.128 & \textbf{0.204}\\
     $\delta < 1.25^{0.5}$ & 0.369 & 0.364 & 0.267 & 0.259 & \textbf{0.385}\\
     $\delta < 1.25$ & 0.640 & 0.634 & 0.530 & 0.509 & \textbf{0.653}\\
     $\delta < 1.25^2$ & 0.890 & 0.891 & 0.854 & 0.831 & \textbf{0.888}\\
     $\delta < 1.25^3$ & 0.965 & 0.966 & 0.951 & 0.939 & \textbf{0.958}\\
     \hline
    %  & rel & $\log_{10}$ & rms & $\delta < 1.25^{0.25}$ & $\delta < 1.25^{0.5}$ & $\delta < 1.25$ & $\delta < 1.25^2$ & $\delta < 1.25^3$\\
    % \hline \hline
    % FCRN/$\mathrm{IID}$ & 0.228 & 0.298 & 0.881 & 0.193 & 0.369 & 0.640 & 0.890 & 0.965 \\
    % FCRN/$\mathrm{IID}$* & 0.225 & 0.297 & 0.874 & 0.189 & 0.364 & 0.634 & 0.891 & 0.966 \\
    % \hline\hline
    % UNet/$\mathrm{IID}$* & 0.241 & 0.341 & 1.046 & 0.132 & 0.267 & 0.530 & 0.854 & 0.951 \\
    % UNet/cGAN* & 0.247 & 0.362 & 1.095 & 0.128 & 0.259 & 0.509 & 0.831 & 0.939 \\
    % UNet/ASM* & \textbf{0.215} & \textbf{0.336} & \textbf{0.936} & \textbf{0.204} & \textbf{0.385} & \textbf{0.653} & \textbf{0.888} & \textbf{0.958} \\
    % \hline
  \end{tabular}
  \vspace{8pt}
  \caption{Depth estimation measurements on 2D-3D-Semantics~\cite{2D-3D-S} dataset. Note that lower is better for the first three rows, and higher is better for the last five rows. * denotes joint training of depth and surface normal prediction.}
  \label{tab:app_depth}
\end{table}

\begin{figure*}
    \centering
    \includegraphics[width=0.9\linewidth]{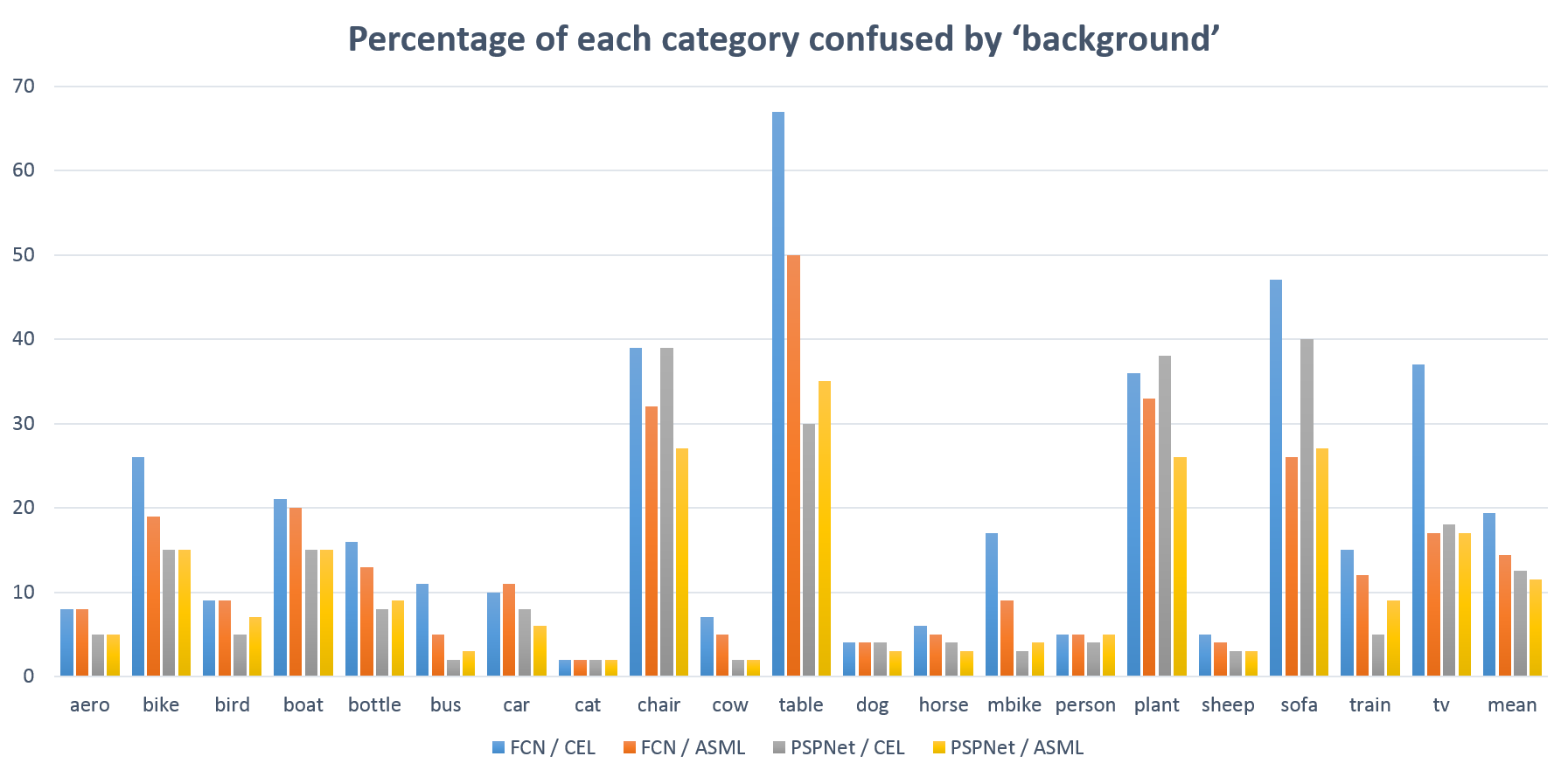}
    \caption{Percentage of each category confused by `background' (the lower the better). The categories with much confusion reduction are `chair', `plant',`sofa', and `tv'.}
    \label{fig:confusion}
\end{figure*}

\section{Confusing Context Improvement for Segmentation}

We next demonstrate the robustness of our proposed method under confusing context. We first calculate the confusion matrix of pixel accuracy on PASCAL VOC 2012~\cite{everingham2010pascal} validation set. We identify that `background' is biggest confuser for most of the categories and hence we summarize the percentage of confusion in Fig.~\ref{fig:confusion} (i.e., the `background' column from the confusion matrix). ASM reduces the overall confusion caused by `background' from $19.4\%$ to $14.45\%$ on FCN and from $12.5\%$ to $11.5\%$ on PSPNet with $8.7\%$ relative error reduction. Large improvements come from resolving confusion of `chair', `plant', `sofa', and `tv'.

\section{Network Architecture}
\label{subsec:network_architecture}
We use UNet~\cite{ronneberger2015u} as our structure analyzer for all experiments and structured prediction network for depth estimation and surface normal prediction on 2D-3D-Semantics~\cite{2D-3D-S} dataset. The detail architecture of structure analyzers is summarized in Table~\ref{tab:analyzer_weizmann}, \ref{tab:analyzer_pascal}, \ref{tab:analyzer_2d3ds_single}, and  \ref{tab:analyzer_2d3ds_joint}. The architecture for structured prediction network on 2D-3D-S~\cite{2D-3D-S} is shown in Table~\ref{tab:structure_predictor_unet}. We set `stride' to $2$ for each $conv$ layers in the encoder of UNet. We set `stride' to $1$ and apply $bilinear$ upsampling in the decoder of UNet.

\begin{table*}
\parbox{.48\linewidth}{
  \centering
    \begin{tabular}{|l|c|c|}
      \hline
      %& \multicolumn{2}{c|}{Single Task Training}\\
      %\hline
      layer name & connection & dimension\\
      \hline\hline
      $conv1$ & $input$ & $5 \times 5, 32$\\
      $conv2$ & $conv1$ & $5 \times 5, 64$\\
      $conv3$ & $conv2$ & $5 \times 5, 128$\\
      $conv4$ & $conv3$ & $5 \times 5, 256$\\
      $conv5$ & $conv4\,\&\,conv3$ & $3 \times 3, 128$\\
      $conv6$ & $conv5\,\&\,conv2$ & $3 \times 3, 64$\\
      $conv7$ & $conv6\,\&\,conv1$ & $3 \times 3, 32$\\
      $output$ & $conv7$ & $1 \times 1, 1$\\
      \hline
    \end{tabular}
    \vspace{8pt}
    \caption{Architecture of structure analyzer for figure-ground segmentation on the Weizmann horse~\cite{borenstein2002horse}.}
    \label{tab:analyzer_weizmann}
    } \hfill
    \parbox{.48\linewidth}{
    \centering
    \begin{tabular}{|l|c|c|}
      \hline
      %& \multicolumn{2}{c|}{Joint Task Training}\\
      %\hline
      layer name & connection & dimension\\
      \hline\hline
      $conv1$ & $input$ & $3 \times 3, 128$\\
      $conv2$ & $conv1$ & $3 \times 3, 256$\\
      $conv3$ & $conv2$ & $3 \times 3, 256$\\
      $conv4$ & $conv3\,\&\,conv2$ & $3 \times 3, 256$\\
      $conv5$ & $conv4\,\&\,conv1$ & $3 \times 3, 128$\\
      $output$ & $conv5$ & $1 \times 1, 21$\\
      \hline
    \end{tabular}
    \vspace{8pt}
    \caption{Architecture of structure analyzer for semantic segmentation on PASCAL VOC 2012~\cite{everingham2010pascal}.}
    \label{tab:analyzer_pascal}
    }
\end{table*}

\begin{table*}
\parbox{.48\linewidth}{
  \centering
    \begin{tabular}{|l|c|c|}
      \hline
      %& \multicolumn{2}{c|}{Single Task Training}\\
      %\hline
      layer name & connection & dimension\\
      \hline\hline
      $conv1$ & $input$ & $3 \times 3, 32$\\
      $conv2$ & $conv1$ & $3 \times 3, 64$\\
      $conv3$ & $conv2$ & $3 \times 3, 128$\\
      $conv4$ & $conv3$ & $3 \times 3, 128$\\
      $conv5$ & $conv4$ & $3 \times 3, 128$\\
      $conv6$ & $conv5\,\&\,conv4$ & $3 \times 3, 128$\\
      $conv7$ & $conv6\,\&\,conv3$ & $3 \times 3, 128$\\
      $conv8$ & $conv7\,\&\,conv2$ & $3 \times 3, 64$\\
      $conv9$ & $conv8\,\&\,conv1$ & $3 \times 3, 32$\\
      $output$ & $conv9$ & $1 \times 1, 1 (or 3)$\\
      \hline
    \end{tabular}
    \vspace{8pt}
    \caption{Architecture of structure analyzer for depth or surface normal estimation on 2D-3D-S~\cite{2D-3D-S}. Note that the output channel is 1 for depth and 3 for surface normal.}
    \label{tab:analyzer_2d3ds_single}
    } \hfill
    \parbox{.48\linewidth}{
    \centering
    \begin{tabular}{|l|c|c|}
      \hline
      %& \multicolumn{2}{c|}{Joint Task Training}\\
      %\hline
      layer name & connection & dimension\\
      \hline\hline
      $conv1\_1$ & $input\_depth$ & $3 \times 3, 32$\\
      $conv1\_2$ & $input\_normal$ & $3 \times 3, 32$\\
      $conv2$ & $conv1^*$ & $3 \times 3, 128$\\
      $conv3$ & $conv2$ & $3 \times 3, 256$\\
      $conv4$ & $conv3$ & $3 \times 3, 256$\\
      $conv5$ & $conv4$ & $3 \times 3, 256$\\
      $conv6$ & $conv5\,\&\,conv4$ & $3 \times 3, 256$\\
      $conv7$ & $conv6\,\&\,conv3$ & $3 \times 3, 256$\\
      $conv8$ & $conv7\,\&\,conv2$ & $3 \times 3, 128$\\
      $conv9$ & $conv8\,\&\,conv1^*$ & $3 \times 3, 64$\\
      $output\_depth$ & $conv9$ & $1 \times 1 , 1$\\
      $output\_normal$ & $conv9$ & $1 \times 1, 3$\\
      \hline
    \end{tabular}
    \vspace{8pt}
    \caption{Architecture of structure analyzer for joint training of depth and surface normal estimation on 2D-3D-S~\cite{2D-3D-S}. $conv1^*$ is the concatenation of $conv1\_1$ and $conv1\_2$.}
    \label{tab:analyzer_2d3ds_joint}
    }
\end{table*}

\begin{table}
  \centering
    \begin{tabular}{|l|c|c|}
      \hline
      %& \multicolumn{2}{c|}{Joint Task Training}\\
      %\hline
      layer name & connection & dimension\\
      \hline\hline
      $conv1$ & $input$ & $3 \times 3, 64$\\
      $conv2$ & $conv1$ & $3 \times 3, 128$\\
      $conv3$ & $conv2$ & $3 \times 3, 256$\\
      $conv4$ & $conv3$ & $3 \times 3, 512$\\
      $conv5$ & $conv4$ & $3 \times 3, 512$\\
      $conv6$ & $conv5$ & $3 \times 3, 512$\\
      $conv7$ & $conv6$ & $3 \times 3, 512$\\
      $conv8$ & $conv7$ & $3 \times 3, 512$\\
      $conv9$ & $conv8$ & $3 \times 3, 512$\\
      $conv10$ & $conv9\,\&\,conv8$ & $[3 \times 3, 512] \times 3$\\
      $conv11$ & $conv10\,\&\,conv7$ & $[3 \times 3, 512] \times 3$\\
      $conv12$ & $conv11\,\&\,conv6$ & $[3 \times 3, 512] \times 3$\\
      $conv13$ & $conv12\,\&\,conv5$ & $[3 \times 3, 512] \times 3$\\
      $conv14$ & $conv13\,\&\,conv4$ & $[3 \times 3, 512] \times 3$\\
      $conv15$ & $conv14\,\&\,conv3$ & $[3 \times 3, 256] \times 3$\\
      $conv16$ & $conv15\,\&\,conv2$ & $[3 \times 3, 128] \times 3$\\
      $conv17$ & $conv16\,\&\,conv1$ & $[3 \times 3, 64] \times 3$\\
      $output$ & $conv5$ & $1 \times 1, 1 (or 3)$\\
      \hline
    \end{tabular}
    \caption{Architecture of structured prediction network for depth estimation and surface normal prediction on 2D-3D-Semantics~\cite{2D-3D-S}.}
    \label{tab:structure_predictor_unet}
\end{table}

\begin{figure}
    \centering
    \includegraphics[width=1.0\linewidth]{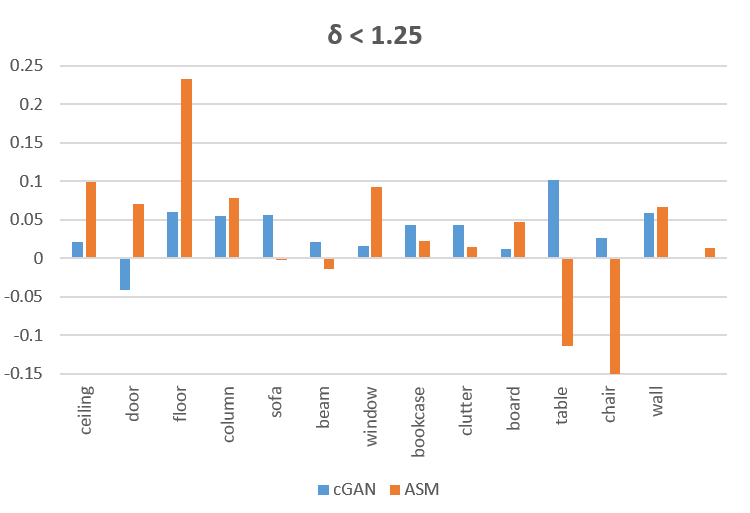}
    \caption{Improvement of instance-average depth metric on 2D-3D-Semantics~\cite{2D-3D-S}. The categrories with largest improvement is `floor', `ceiling' and `window'. Note that the results are based on single task training of depth estimation.}
    \label{fig:inst_depth}
\end{figure}

\begin{figure*}[hb]
    \centering
    \includegraphics[width=0.97\linewidth]{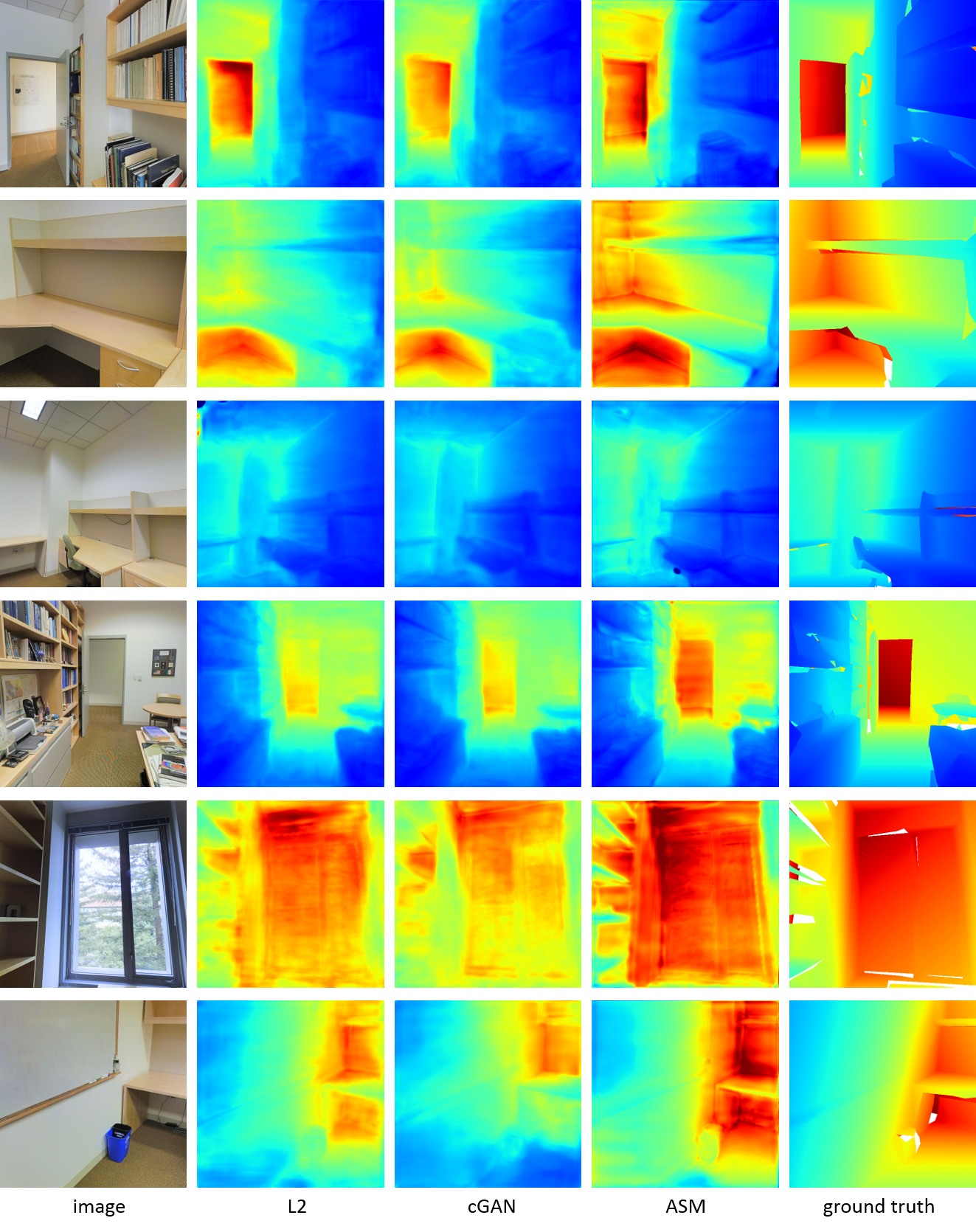}
    \caption{Visual quality comparison for depth estimation task on 2D-3D-Semantics~\cite{2D-3D-S}. From left to right: Image, Ground-truth, UNet~\cite{ronneberger2015u} trained using $L_2$ for depth and normalized $L_2$ for surface normal, cGAN and ASM. The models are jointly trained with depth and surface normal estimation.}
    \label{fig:stanford_joint}
\end{figure*}

\end{document}